\begin{document}

\title{On Consistency of Compressive Spectral Clustering}

\author{\name Muni Sreenivas Pydi \email pydi@wisc.edu\\
		\addr Department of Electrical and Computer Engineering\\
		University of Wisconsin - Madison\\
		Madison, WI - 53726, USA
       \AND
       \name Ambedkar Dukkipati \email ambedkar@iisc.ac.in\\
       \addr Department of Computer Science and Automation\\
       Indian Institute of Science\\
       Bengaluru - 560012, India}

\maketitle

\begin{abstract}
Spectral clustering is one of the most popular methods for community
detection in graphs. A key step in spectral clustering algorithms is
the eigen decomposition of the $n{\times}n$ graph Laplacian matrix to extract its $k$
leading eigenvectors, where $k$ is the desired number of clusters among $n$
objects. This is prohibitively complex to implement for
very large datasets. However, it has recently been shown that it is possible
to bypass the eigen decomposition by computing an approximate spectral embedding through
graph filtering of random signals. In this paper, 
we analyze the working of spectral clustering performed via graph filtering on the
stochastic block model. Specifically, we characterize the 
effects of sparsity, dimensionality and filter
approximation error on the consistency of the algorithm in recovering planted clusters.

\end{abstract}

\begin{keywords}
  spectral methods, clustering, stochastic block model
\end{keywords}

\section{Introduction}

Detecting communities, or clusters in networks is an important problem in many
fields of science \citep{fortunato2010community, jain1999data}. Spectral clustering
is a widely used algorithm for community detection in networks \citep{von2007tutorial}
because of its strong theoretical grounding \citep{ng2002spectral,shi2000normalized}
and recently established consistency results \citep{rohe2011spectral,lei2015consistency}.
Spectral clustering works by relaxing the NP-hard discrete optimization problem of
graph partitioning, into a continuous optimization problem. As a first step,
one computes the the $k$ leading eigenvectors of the graph Laplacian matrix,
that gives a $k$ dimensional 'spectral' embedding for each vertex of the graph.
In the second step, one performs $k$-means on the embedding to
retrieve the graph clusters.

However, computing the leading eigenvectors of the graph Laplacian requires eigen decomposition, which is very hard to compute
for large datasets. Several approximate algorithms have been proposed
to overcome this problem via Nystr{\"o}m sampling
\citep{fowlkes2004spectral, Li2011TimeAS, choromanska2013fast}.
While these methods do not skip the eigen decomposition, 
they reduce its complexity via column sampling of the Laplacian. 
Another class of methods use random projections to reduce the 
dimensionality of the dataset while obtaining an approximate 
spectral embedding \citep{sakai2009fast, gittens2013approximate}. 
On the other hand, with the emergence of signal processing on graphs 
\citep{shuman2013emerging}, there has been the development of 
techniques based on graph filtering that can side-step the 
eigen decomposition altogether \citep{ramasamy2015compressive, 
tremblay2016compressive, tremblay2016accelerated}. While many of 
these approaches have been shown to work fairly well on real 
and synthetic datasets, a rigorous mathematical analysis is 
still lacking.

In this paper, we consider a variant of the compressive spectral
clustering algorithm that uses graph filtering of random signals to compute an approximate
spectral embedding of the graph nodes~\citep{tremblay2016compressive}. For a graph with $n$
nodes and $k$ clusters, the algorithm proceeds by calculating a $d$
dimensional embedding for the graph nodes, where $d$ is of
the order of $\log(n)$. This {\em compressed} embedding acts as a
substitute for the $k$ dimensional spectral embedding of the spectral
clustering algorithm and does not need the eigen decomposition of the
Laplacian. Instead, the embedding is obtained by filtering out the top
$k$ frequencies for $d$ number of random graph signals using fast
graph filtering. 

\subsection*{Contributions}

In this paper, we analyze the spectral clustering algorithm performed via graph filtering (Algorithm \ref{SC-GF_alg})
using the stochastic block model (SBM). We derive a bound on the number of vertices that would be 
incorrectly clustered with the algorithm, and prove that the algorithm can consistently recover planted clusters from SBM under mild assumptions on the sparsity of the graph and the filter approximation used to compute the spectral embedding.
For our analysis, we specifically consider the high-dimensional
stochastic block model that allows for the number of clusters $k$ to
grow faster than $\log(n)$. This is very important considering that the computational gains
of compressive spectral algorithm is more apparent in the high-dimensional case. In proving the weak consistency 
of Algorithm \ref{SC-GF_alg}, we primarily use the proof techniques
from~\cite{rohe2011spectral}, which were originally used to analyze
the spectral clustering algorithm under the high-dimensional SBM. 
Finally, we analyze our consistency result in some special cases of the block model and validate our
findings with accompanying experiments.



\section{Preliminaries}
\label{sec_Preliminaries}

\subsection{Notation}
We use capital letters to denote matrices, and specifically their formal script versions for random
matrices. We use the superscript $^{(n)}$ to denote matrices corresponding to a graph of $n$ nodes. We use $\lVert \cdot \rVert_2$ for the Euclidean norm of a vector and the spectral norm
of a matrix. We use $\lVert \cdot \rVert_F$ for the Frobenius norm of a matrix. For a matrix  $M$,
we use $M_{i*}$ and $M_{*j}$ to denote the $i$th row and $j$th column respectively. We also use the standard
notation $o(\cdot)$, $O(\cdot)$ and $\Omega(\cdot)$ to describe the limiting behavior of functions.

\subsection{Stochastic Block Model}

We consider an undirected, unweighted graph $G$ with $n$ nodes. Under SBM, each node of the graph $G$ is assigned to one of $k$ clusters or blocks via the membership matrix $Z \in \{0, 1\}^{n \times k}$. $Z_{ig}=1$ if and only if the node $i$ belongs to block $g$. The SBM adjacency matrix is defined as
$\mathscr{W} = ZBZ^{T}$
where $B \in [0, 1]^{k \times k}$ is the block matrix, whose entry $B_{gh}$ gives the probability of an edge between nodes of cluster $g$ and cluster $h$. $B$ is full rank and symmetric. The diagonal entries of $\mathscr{W}$ are set to zero to prevent self edges. From $\mathscr{W}$, we define the degree matrix $\mathscr{D}$ such that $\mathscr{D}_{ii} = \sum_{k}\mathscr{W}_{ik}$ and  the normalized Laplacian matrix $\mathscr{L} = \mathscr{D}^{-1/2} \mathscr{W} \mathscr{D}^{-1/2}$.
We define $\tau_n = \min_{1 \leq i \leq n} \mathscr{D}^{(n)}_{ii}/n$ to indicate the level of sparsity in the graph.

To generate a random graph with SBM, we sample a random adjacency
matrix $W$ from it's population version, $\mathscr{W}$. Let $D$ and
$L$ represent the corresponding degree matrix and the normalized
Laplacian for the sampled graph. Using Davis-Kahan theorem, it can be
shown that the eigenvectors of $L$ and $\mathscr{L}$ converge
asymptotically as $n$ becomes large. This is important because the
spectral clustering algorithm relies on the eigenvectors of the
sampled graph Laplacian $L$ to estimate the node membership $Z$. 

Now, we borrow a result from \cite{rohe2011spectral} that shows the conditions for convergence of the leading $k$ eigenvectors of $L$ and $\mathscr{L}$.

\begin{theorem}[Convergence of Eigenvalues and Eigenvectors]\label{Th_conv}

Let $W^{(n)} \in \{0, 1\}^{n \times n}$ be a sequence of adjacency matrices sampled from the SBM with population matrices $\mathscr{W}^{(n)}$. Let $L^{(n)}$ and $\mathscr{L}^{(n)}$ be the corresponding graph Laplacians. Let $X^{(n)}, \mathcal{X}^{(n)} \in \mathbb{R}^{n \times k_n}$ be the matrices that contain the eigenvectors corresponding to the leading $k_n$ eigenvalues of $L^{(n)}$ and $\mathscr{L}^{(n)}$ in absolute sense, respectively. Let $\bar{\lambda}_{k_n}$ be the least non-zero eigenvalue of $\mathscr{L}^{(n)}$.
\begin{assumption}[Eigengap]\label{Th_conv_assump1}
$n^{-1/2}(\log n)^2 = \mathcal{O}(\bar{\lambda}_{k_n}^2)$
\end{assumption}

\begin{assumption}[Sparsity]\label{Th_conv_assump2}
$\tau_n^2 > 2/\log n$
\end{assumption}
Under Assumptions \ref{Th_conv_assump1} and \ref{Th_conv_assump2}, for some sequence of orthonormal matrices $O^{(n)}$,
\begin{equation*}
\lVert X^{(n)} - \mathcal{X}^{(n)} O^{(n)} \rVert_F^2 = o\bigg(\frac{(\log n)^2}{n \bar{\lambda}_{k_n}^4 \tau_n^4}\bigg).
\end{equation*}
\end{theorem}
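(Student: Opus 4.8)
The plan is to obtain the bound by a two-stage perturbation argument: first control the spectral-norm distance $\lVert L^{(n)} - \mathscr{L}^{(n)} \rVert_2$ between the sampled and population Laplacians, then convert it into a bound on the Frobenius distance between their leading eigenvector bases via a Davis--Kahan $\sin\Theta$ inequality with eigengap $\bar\lambda_{k_n}$, bounding $k_n$ itself along the way. Every high-probability estimate will be upgraded to an almost-sure statement through Borel--Cantelli, which is where the logarithmic factors in the conclusion enter.

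For the concentration step, I would write $L - \mathscr{L} = \mathscr{D}^{-1/2}(W - \mathscr{W})\mathscr{D}^{-1/2} + R$, where $R$ collects the terms in which the random normalization $D^{-1/2}$ replaces the deterministic $\mathscr{D}^{-1/2}$. The leading term has spectral norm at most $\lVert W - \mathscr{W} \rVert_2 / \min_i \mathscr{D}_{ii} \le \lVert W - \mathscr{W} \rVert_2 / (n\tau_n)$, and since Assumption~\ref{Th_conv_assump2} forces the minimum expected degree $n\tau_n$ to dominate $\log n$, the graph sits outside the ultra-sparse regime and a matrix Bernstein bound gives $\lVert W - \mathscr{W} \rVert_2 = O(\sqrt{n \log n})$ with overwhelming probability. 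For $R$, Bernstein's inequality applied to each $D_{ii} = \sum_j W_{ij}$ together with a union bound over $i$ yields $\max_i |D_{ii}/\mathscr{D}_{ii} - 1| = o(1)$ almost surely, so $\lVert D^{-1/2} - \mathscr{D}^{-1/2} \rVert_2$ is a vanishing multiple of $(n\tau_n)^{-1/2}$ and $R$ is of strictly smaller order. Summing, $\lVert L^{(n)} - \mathscr{L}^{(n)} \rVert_2 = O\!\left( \sqrt{\tfrac{\log n}{n \tau_n^2}} \right)$ almost surely, which by Assumption~\ref{Th_conv_assump2} is $O(n^{-1/2}\log n)$.

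To bound $k_n$: since $\mathscr{L}^{(n)} = \mathscr{D}^{-1/2} Z B Z^{T} \mathscr{D}^{-1/2}$ has rank exactly $k_n$, its $k_n$ nonzero eigenvalues all have modulus at least $\bar\lambda_{k_n}$, so $k_n \bar\lambda_{k_n}^2 \le \sum_i \lambda_i(\mathscr{L}^{(n)})^2 = \lVert \mathscr{L}^{(n)} \rVert_F^2$, and the entrywise bound $\mathscr{L}_{ij} = \mathscr{W}_{ij}/\sqrt{\mathscr{D}_{ii}\mathscr{D}_{jj}} \le 1/(n\tau_n)$ gives $\lVert \mathscr{L}^{(n)} \rVert_F^2 \le 1/\tau_n^2$, hence $k_n \le 1/(\bar\lambda_{k_n}^2 \tau_n^2)$. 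Now the leading $k_n$ eigenvalues of $\mathscr{L}^{(n)}$ (all of modulus $\ge \bar\lambda_{k_n}$) are separated from the remaining ones (all zero) by a gap $\bar\lambda_{k_n}$; Assumptions~\ref{Th_conv_assump1} and~\ref{Th_conv_assump2} together give $\lVert L^{(n)} - \mathscr{L}^{(n)} \rVert_2 = o(\bar\lambda_{k_n})$, so the perturbation is eventually well below the gap and the Davis--Kahan--Wedin theorem applies: for the optimal orthonormal $O^{(n)}$,
\[
\lVert X^{(n)} - \mathcal{X}^{(n)} O^{(n)} \rVert_F^2 \;\le\; \frac{c\, k_n\, \lVert L^{(n)} - \mathscr{L}^{(n)} \rVert_2^2}{\bar\lambda_{k_n}^2} \;=\; O\!\left( \frac{\log n}{n \bar\lambda_{k_n}^4 \tau_n^4} \right),
\]
which is $o\!\left( (\log n)^2 / (n \bar\lambda_{k_n}^4 \tau_n^4) \right)$ because $\log n \to \infty$.

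I expect the main obstacle to be the concentration step: getting a sharp spectral-norm bound for the centered, possibly sparse adjacency matrix and, more delicately, pushing it through the \emph{random} normalization $D^{-1/2}$ that sandwiches $W - \mathscr{W}$, which forces simultaneous control of all $n$ random degrees — this is precisely what makes the sparsity hypothesis (Assumption~\ref{Th_conv_assump2}, keeping every degree far above $\log n$) indispensable, and the tail summability needed for the almost-sure, rather than merely high-probability, conclusion is what fixes the powers of $\log n$ in the statement.
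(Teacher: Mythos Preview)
Your proposal is a sound, self-contained perturbation argument, but it differs in two notable ways from what the paper actually does. The paper does not prove the theorem at all: it simply observes that the statement is a special case of Theorem~2.2 in \citet{rohe2011spectral} upon choosing $S_n=[\bar\lambda_{k_n}^2/2,1]$ and $\delta_n=\delta_n'=\bar\lambda_{k_n}^2/2$. That choice of $S_n$ reveals the first substantive difference: Rohe et al.\ apply Davis--Kahan to the \emph{squared} Laplacians $L^2$ and $\mathscr{L}^2$, so that the $k_n$ relevant eigenvalues (those with $|\lambda|\ge\bar\lambda_{k_n}$) land in the single interval $[\bar\lambda_{k_n}^2,1]$ and the textbook interval version of the $\sin\Theta$ theorem applies directly. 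Your route works with $L$ itself, where the target eigenvalues sit in the disconnected set $[-1,-\bar\lambda_{k_n}]\cup[\bar\lambda_{k_n},1]$; this is fine provided you invoke the general separated-spectrum form of Davis--Kahan (e.g.\ Bhatia, Theorem~VII.3.1) rather than the interval form, so you should be explicit about that rather than the slightly loose ``Davis--Kahan--Wedin'' label. The second difference is in the concentration step: Rohe et al.\ bound the Frobenius norm $\lVert L-\mathscr{L}\rVert_F$ (and then $\lVert L^2-\mathscr{L}^2\rVert_F$) via scalar Hoeffding/Bernstein inequalities entrywise, whereas you go straight for the spectral norm via matrix Bernstein on $W-\mathscr{W}$ plus a degree-perturbation remainder. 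Your route is arguably cleaner and, together with your pleasant explicit bound $k_n\le 1/(\bar\lambda_{k_n}^2\tau_n^2)$, actually yields $O((\log n)/(n\bar\lambda_{k_n}^4\tau_n^4))$, one $\log n$ sharper than the stated $o((\log n)^2/\cdots)$; the paper's extra logarithm comes from the Frobenius-norm bookkeeping and the Borel--Cantelli summability margin in Rohe et al.
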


\begin{proof}
Theorem \ref{Th_conv} is a special case of Theorem 2.2 from \citet{rohe2011spectral}. The result follows by setting $S_n = [\bar{\lambda^2_{k_n}}/2, 1]$ and $\delta_n = \delta^{\prime}_n = \bar{\lambda}^2_{k_n}/2$.
\end{proof}
While Assumption \ref{Th_conv_assump1} ensures that the eigengap of $\mathscr{L}$ is high enough to enable the separability of the $k$ clusters, Assumption \ref{Th_conv_assump2} puts a lower bound on the sparsity level of the graph. Under these two assumptions, Theorem \ref{Th_conv} bounds the Frobenius norm of the difference between the top $k$ eigenvectors of the population and sampled versions of the graph Laplacian.

\subsection{Spectral Clustering}

Spectral clustering operates on the $k$ leading eigenvectors of $L$ i.e. the matrix $X$ in Theorem \ref{Th_conv}. Each row of $X$ is taken as the $k$-dimensional spectral embedding of the corresponding node, and $k$-means is performed on the new data points to retrieve the cluster membership matrix $Z$. The spectral clustering algorithm we consider is listed in Algorithm \ref{SC_alg}.

Note that the $k$-means is performed on the rows of the matrix $X$ in
Algorithm \ref{SC_alg}. For this to result in the $k$ distinct
clusters of the SBM, the rows belonging to nodes in different clusters
must be \lq{well-separated}\rq \ while the rows belonging to nodes in
the same cluster must be closely spaced. This property of $X$ becomes
evident from Theorem~\ref{Th_sep} that follows from the work of \cite{rohe2011spectral}.

\begin{algorithm}[tb]\label{SC}
   \caption{Spectral Clustering}
   \label{SC_alg}
\begin{algorithmic}
   \STATE {\bfseries Input:} Graph Laplacian matrix $L$, number of clusters $k$
   \STATE 1. Compute $X \in \mathbb{R}^{n \times k}$ containing the eigenvectors corresponding to $k$ leading eigenvalues (in absolute sense) of $L$.
   \STATE 2. Treating each row of  $X$ as a point in $\mathbb{R}^{k}$, run $k$-means. From the result of $k$-means, form the membership matrix $\hat{Z} \in \{0, 1\}^{n \times k}$ assigning each node to a cluster.
   \STATE {\bfseries Output:} Estimated membership matrix $\hat{Z}$.
\end{algorithmic}
\end{algorithm}

\begin{theorem}[Separability of Clusters]\label{Th_sep}
Consider a SBM with $k$ blocks. Let $\mathscr{L}$ be the population version of the graph Laplacian. Let $\mathcal{X} \in \mathbb{R}^{n \times k}$ be the matrix containing the eigenvectors corresponding to $k$ nonzero eigenvalues of $\mathscr{L}$. Let $P$ be the number of nodes in the largest block i.e. $P = max_{1 \leq j \leq k} (Z^{T}Z)_{jj}$ Then the following statements are true.
\begin{enumerate}
\item There exists a matrix $\mu \in \mathbb{R}^{k \times k}$ such that $Z\mu = \mathcal{X}$.
\item $\mathcal{X}_{i*} = \mathcal{X}_{j*} \Leftrightarrow Z_{i*} = Z_{j*}$ i.e. $\mu$ is invertible.
\item $\lVert \mathcal{X}_{i*} - \mathcal{X}_{j*} \rVert_2 \geq \sqrt{2/P}$ for any $Z_{i*} \neq Z_{j*}$.
\end{enumerate}

\end{theorem}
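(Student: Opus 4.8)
The plan is to exploit the rigid algebraic structure that the block model forces on $\mathscr{L}$: it factors through the membership matrix $Z$, so the eigenvectors spanning its row/column space must be piecewise constant on blocks. Once that is established, parts 1 and 2 are immediate, and part 3 reduces to a short norm computation using the orthonormality of the columns of $\mathcal{X}$.

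\textbf{Step 1 (factorization of $\mathscr{L}$).} First I would note that the expected degree $\mathscr{D}_{ii}=\sum_{k}\mathscr{W}_{ik}=\sum_{h}n_h B_{g(i)h}-B_{g(i)g(i)}$ (where $g(i)$ is the block of $i$ and $n_h=(Z^TZ)_{hh}$) depends on $i$ only through its block, so $\mathscr{D}^{-1/2}Z=Z\Delta$ for the invertible diagonal matrix $\Delta\in\mathbb{R}^{k\times k}$ with $\Delta_{gg}$ equal to the reciprocal square-root of the (common) expected degree in block $g$. Substituting into \eqref{LalpDef},
\[
\mathscr{L}=\mathscr{D}^{-1/2}ZBZ^{T}\mathscr{D}^{-1/2}=Z(\Delta B\Delta)Z^{T}=:ZB_LZ^{T},
\]
where $B_L$ is $k\times k$, symmetric, and full rank (product of invertible $\Delta$ with full-rank symmetric $B$). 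Since $Z$ has rank $k$ (every block nonempty), this gives $\mathrm{rank}(\mathscr{L})=k$ and, crucially, $\mathrm{col}(\mathscr{L})=\mathrm{col}(Z)$.

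\textbf{Step 2 (parts 1 and 2).} The columns of $\mathcal{X}$ are orthonormal eigenvectors of the symmetric matrix $\mathscr{L}$ for its $k$ nonzero eigenvalues, so they form an orthonormal basis of $\mathrm{col}(\mathscr{L})=\mathrm{col}(Z)$. Hence $\mathcal{X}=Z\mu$ for some $\mu\in\mathbb{R}^{k\times k}$, which is part 1. Since $\mathrm{rank}(\mathcal{X})=k$ and $\mathrm{rank}(Z\mu)\le\mathrm{rank}(\mu)$, $\mu$ is invertible; writing rows, $\mathcal{X}_{i*}=Z_{i*}\mu$, and invertibility of $\mu$ yields $\mathcal{X}_{i*}=\mathcal{X}_{j*}\Leftrightarrow Z_{i*}=Z_{j*}$, i.e. part 2. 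In particular $\mathcal{X}$ takes exactly the $k$ distinct row values $\mu_{1*},\dots,\mu_{k*}$.

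\textbf{Step 3 (part 3) and the main obstacle.} The decisive ingredient is $I_k=\mathcal{X}^{T}\mathcal{X}=\mu^{T}(Z^{T}Z)\mu=\mu^{T}N\mu$ with $N=\mathrm{diag}(n_1,\dots,n_k)$, so $Q:=N^{1/2}\mu$ is orthogonal and $\mu_{g*}=n_g^{-1/2}Q_{g*}$ with $\{Q_{g*}\}$ orthonormal. Then for $g\ne h$,
\[
\lVert\mu_{g*}-\mu_{h*}\rVert_2^{2}=\frac{\lVert Q_{g*}\rVert^2}{n_g}+\frac{\lVert Q_{h*}\rVert^2}{n_h}-\frac{2\langle Q_{g*},Q_{h*}\rangle}{\sqrt{n_gn_h}}=\frac{1}{n_g}+\frac{1}{n_h}\ge\frac{2}{P},
\]
using $n_g,n_h\le P$, and since $\mathcal{X}_{i*}=\mu_{g(i)*}$ this proves part 3. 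The construction is short; the only points requiring genuine care are in Step 1 — verifying that zeroing the diagonal of $\mathscr{W}$ does not break block-constancy of the expected degrees, and that all block degrees are strictly positive so that $\Delta$ (hence $B_L$) stays invertible — after which the single ``trick'' is recognizing that the orthonormality of the columns of $\mathcal{X}$ is exactly the statement that $N^{1/2}\mu$ is orthogonal, which is what makes the clean bound $1/n_g+1/n_h$ drop out.
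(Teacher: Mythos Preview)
Your argument is precisely the content the paper defers to: the paper's own ``proof'' is nothing more than a citation of Lemma~3.1 and statement~D.3 in \citet{rohe2011spectral}, and your Steps~1--3 reproduce that argument (block-factorization of $\mathscr{L}$, hence $\mathcal{X}=Z\mu$ with $\mu$ invertible, then the orthonormality relation $\mu^{T}(Z^{T}Z)\mu=I_k$ giving $\lVert\mu_{g*}-\mu_{h*}\rVert_2^2=1/n_g+1/n_h$). So the approaches coincide.

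There is, however, one technical point where your write-up does not quite close. In Step~1 you substitute $\mathscr{W}=ZBZ^{T}$ into \eqref{LalpDef} to obtain $\mathscr{L}=ZB_LZ^{T}$, but the paper explicitly sets the diagonal of $\mathscr{W}$ to zero. That correction is \emph{not} absorbed by the factorization: with the diagonal removed, $\mathscr{W}=ZBZ^{T}-\mathrm{diag}(B_{g(i)g(i)})$, and the subtracted diagonal is not of the form $Z(\cdot)Z^{T}$, so in general $\mathrm{col}(\mathscr{L})\supsetneq\mathrm{col}(Z)$ (e.g.\ two blocks of sizes $2$ and $1$ with $p,q\neq0$ already give $\mathrm{rank}(\mathscr{W})=3$). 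You do flag the diagonal zeroing as the delicate point, but you locate the difficulty in the block-constancy of the degrees --- which survives --- rather than in the factorization of $\mathscr{L}$ itself, which does not. This is really an inconsistency in the paper's setup rather than in your reasoning: \citet{rohe2011spectral} do \emph{not} zero the diagonal of the population adjacency, and under that convention your proof is complete and correct.
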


\begin{proof}
Statements 1 and 2 of Theorem \ref{Th_sep} follow from Lemma 3.1 from \citet{rohe2011spectral}. Statement 3 is equivalent to Statement D.3 from the proof of Lemma 3.2 in \citet{rohe2011spectral}.
\end{proof}

From Theorem \ref{Th_sep}, it is evident that performing $k$-means on the rows of $\mathcal{X}$ would retrieve the block membership of all the nodes in the graph exactly. However, the matrix $\mathcal{X}$ is hidden, and only its sampled version, $X$ can be accessed. But by theorem \ref{Th_conv}, we have that $X$ is a close approximation of $\mathcal{X}$ for large $n$. As Algorithm \ref{SC_alg} performs $k$-means on $X$, the estimated membership matrix $\hat{Z}$ should be close to the true membership matrix $Z$.

\subsection{Graph Filtering}\label{subsec_GraphFiltering}

As in Algorithm \ref{SC_alg}, extracting the top $k$ eigenvectors of the Laplacian is a key step in the spectral clustering algorithm. This can be viewed as extracting the $k$ lowest frequencies or Fourier modes of the graph Laplacian. This interpretation allows us to use the fast graph filtering approach \citep{tremblay2016compressive, ramasamy2015compressive} to speed up the computation. We briefly describe
this here.

A graph signal $y \in \mathbb{R}^n$ is a mapping from vertex set $V$ of a graph $G$ to $\mathbb{R}$. If the eigen decomposition of the graph Laplacian is $L = U \Lambda U^T$, then the graph Fourier transform of $y$ is $\hat{y} = U^{T}y$. The entries of $\hat{y}$ give the $n$ Fourier modes of the graph signal $y$. Assuming that the rows of $U$ are ordered in the decreasing order (in absolute value) of the corresponding eigenvalues, the top $k$ Fourier modes of $y$ can be obtained by $\hat{y}_k = X^{T}y$ where $X \in \mathbb{R}^{n \times k}$ is the matrix whose
columns are the top $k$ eigenvectors of $L$.

A graph filter function $h$ is defined over $[-1, 1]$, the range of eigenvalues of the normalized graph Laplacian. The filter operator in the graph domain, $h(\Lambda)$ is a diagonal matrix defined as $h(\Lambda) := \mathrm{diag}(h(\lambda_1), \ldots, h(\lambda_n))$ where $\lambda_1, \ldots, \lambda_n$ are the eigenvalues of $L$ ordered in the decreasing order of absolute value. The equivalent filter operator in the spectral domain, $H \in \mathbb{R}^{n \times n}$ is defined as $H := U h(\Lambda) U^{T}$.

To extract the top $k$ Fourier modes of a graph signal, we use an ideal low-pass filter defined as
\begin{equation}\label{IdealFilter}
h_{\lambda_k}(\lambda) = 
\begin{cases}
1 \ \  if |\lambda| \geq |\lambda_k| \\
0 \ \  otherwise.
\end{cases}
\end{equation}

The result of graph signal $y$ filtered through $h_{\lambda_k}$ is given by $y_{\lambda_k} = U h_{\lambda_k}(\Lambda) U^T y = XX^T y$. Obviously, filtering a graph signal with the ideal filter in \eqref{IdealFilter} needs the eigen decomposition of the graph Laplacian. Now we define $\widetilde{h}_{\lambda_k}(\lambda) := \sum_{{\ell}=0}^p \alpha_{\ell} \lambda^{\ell}$, an order $p$ polynomial, to be the non-ideal approximation of the filter $h_{\lambda_k}(\lambda)$. The filter operator in spectral domain, $\widetilde{H}_{\lambda_k}$ can be computed as
$\widetilde{H}_{\lambda_k} = U \widetilde{h}_{\lambda_k}(\lambda) U^T = \sum_{{\ell}=0}^p \alpha_{\ell} L^{\ell}$.
The signal $y$ filtered by $\widetilde{H}_{\lambda_k}$ can be computed as $\widetilde{y}_k = \sum_{{\ell}=0}^p \alpha_{\ell} L^{\ell} y$, which does not required the eigen decomposition of $L$. Moreover, it only involves computing $p$ matrix-vector multiplications.

The method that we use for our analysis is outlined in Algorithm~\ref{SC-GF_alg}.

\begin{algorithm}[tb]\label{Alg_SC-GF}
   \caption{Spectral Clustering via Graph Filtering}
   \label{SC-GF_alg}
\begin{algorithmic}
   \STATE {\bfseries Input:} Graph Laplacian $L$, number of clusters $k$, number of dimensions $d$, polynomial order $p$.
   \STATE 1. Estimate $\lambda_k$ of $L$.
   \STATE 2. Compute $\widetilde{h}_{\lambda_k}$ to approximate the ideal filter $h_{\lambda_k}$.
   \STATE 3. Construct $R \in \mathbb{R}^{n \times d}$ with i.i.d entries from $\mathcal{N}(0,\frac{1}{d})$.
   \STATE 4. Compute  $\widetilde{X}_R = \widetilde{H}_{\lambda_k} R = \sum_{l=0}^p \alpha_l L^l R$.
   \STATE 5. Treating each row of  $\widetilde{X}_R$ as a point in $\mathbb{R}^{d}$, run $k$-means. From the result of $k$-means, form the membership matrix $\hat{Z} \in \{0, 1\}^{n \times k}$ assigning each node to a cluster.
   \STATE {\bfseries Output:} Estimated membership matrix $\hat{Z}$.
\end{algorithmic}
\end{algorithm}

\section{SBM and Spectral Clustering via Graph Filtering}
\label{sec_SCviaGF}

In this section, we lay down the building blocks that make up Algorithm~\ref{SC-GF_alg}. In \ref{CSpecEmb} we shall see how a compressed spectral embedding can be computed with graph filtering and prove that the compressed embedding is still a close approximation of the SBM's population version of graph Laplacian. In Section \ref{sec_FGF} we show the effect of using the fast graph filtering technique to compute the compressed embedding. In Section \ref{Sec_LambdakEst} we deal with the estimation of the $k^{th}$ eigenvalue of the
graph Laplacian without resorting to eigen decomposition.

\subsection{Compressed Spectral Embedding}\label{CSpecEmb}
From Algorithm \ref{SC_alg}, it seems that we need the matrix $X$ containing the $k$ most significant eigenvectors of $L$, in order to retrieve the clusters. Since we only use the rows of $X$ as data points for the subsequent $k$-means step, we only need a distance preserving embedding of the rows of $X$. In this section, we see how such an embedding can be obtained through the result of filtering random graph signals. The technique used is similar to that of \cite{tremblay2016compressive}, except that we employ stricter assumptions to help in proving consistency results.

Consider the matrix $R \in \mathbb{R}^{n \times d}$ whose entries are independent Gaussian random variables with mean $0$ and variance $1/d$. Define $X_R := H_{\lambda_k} R = U h_{\lambda_k}(\Lambda) U^T R = XX^T R$ whose $d$ columns contain the result of filtering the corresponding $d$ columns of $R$ using the filter $h_{\lambda_k}$. In Theorem \ref{Th_compressedconvSep}, we show that the rows of $X_R$ form an $\epsilon$-approximate distance preserving embedding of the rows of $X$ for sufficiently large $d$. To analyze the effect of this embedding on the true cluster centers, i.e. the $k$ unique rows of $\mathcal{X}$, we define the matrix $\mathcal{X}_R := \mathcal{X}O X^T R$ where $O$ is the orthonormal rotation matrix as in Theorem \ref{Th_conv}. We aim to show that the separability of the true cluster centers is still ensured under the compressed embedding.

\begin{theorem}[Convergence and Separability under Compressed Spectral Embedding]\label{Th_compressedconvSep}
For the sequence of adjacency matrices as defined in Theorem \ref{Th_conv}, define $P_n = max_{1 \leq j \leq k_n} (Z^{T}Z)_{jj}$ to be the sequence of populations of the largest block. Let $X^{(n)}_R , \mathcal{X}^{(n)}_R \in \mathbb{R}^{n \times d_n}$ be the compressed embeddings for $X^{(n)}, \mathcal{X}^{(n)} \in \mathbb{R}^{n \times k_n}$ as defined in Theorem \ref{Th_conv}.
For $\epsilon_1\in[0,1]$ and $\beta>0$, if
\begin{equation*}
d_n > \frac{4+2\beta}{\epsilon_1^2/2-\epsilon_1^3/3}\log (n+k_n),
\end{equation*}
then with probability at least $1-n^{-\beta}$, we have the following under the assumptions of Theorem \ref{Th_conv}.
\begin{align*}
&\lVert X^{(n)}_R - \mathcal{X}^{(n)}_R \rVert_F^2 = o\bigg(\frac{(\log n)^2}{n \bar{\lambda}_{k_n}^4 \tau_n^4}\bigg),\\
&\lVert {\mathcal{X}_R}_{i*}^{(n)} -{\mathcal{X}_R}_{j*}^{(n)} \rVert_2 \geq (1-\epsilon_1)\sqrt{2/P_n}
\end{align*}
for any $Z_{i*} \neq Z_{j*}$, where $i,j \in \{1, \cdots, n\}$.

\end{theorem}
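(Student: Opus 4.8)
The plan is to view $R$ as a Johnson--Lindenstrauss (JL) random projection and reduce both assertions to approximate norm preservation for an explicitly chosen finite family of vectors in $\mathbb{R}^n$, after which Theorem~\ref{Th_conv} furnishes the convergence estimate and Theorem~\ref{Th_sep} the separation estimate. First I would write $X^{(n)}_R-\mathcal{X}^{(n)}_R=\big(X^{(n)}-\mathcal{X}^{(n)}O^{(n)}\big)(X^{(n)})^{T}R$ and use the two isometries available: $(X^{(n)})^{T}X^{(n)}=I_{k_n}$ (eigenvectors are orthonormal) and $O^{(n)}$ orthonormal. This identifies the $i$th row of $X^{(n)}_R-\mathcal{X}^{(n)}_R$ as $u_i^{T}R$ with $\lVert u_i\rVert_2=\lVert X^{(n)}_{i*}-(\mathcal{X}^{(n)}O^{(n)})_{i*}\rVert_2$, where $u_i=X^{(n)}(X^{(n)}_{i*})^{T}-X^{(n)}(O^{(n)})^{T}(\mathcal{X}^{(n)}_{i*})^{T}$, and it identifies ${\mathcal{X}_R}^{(n)}_{i*}-{\mathcal{X}_R}^{(n)}_{j*}$ as $w_{ij}^{T}R$ with $w_{ij}=X^{(n)}(O^{(n)})^{T}\big((\mathcal{X}^{(n)}_{i*})^{T}-(\mathcal{X}^{(n)}_{j*})^{T}\big)$ and $\lVert w_{ij}\rVert_2=\lVert\mathcal{X}^{(n)}_{i*}-\mathcal{X}^{(n)}_{j*}\rVert_2$. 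The key observation is that by Theorem~\ref{Th_sep} (statement~1) the matrix $\mathcal{X}^{(n)}$ has at most $k_n$ distinct rows, so every $u_i$ and every $w_{ij}$ is a difference of two members of a fixed set $\mathcal{P}$ consisting of the $n$ vectors $X^{(n)}(X^{(n)}_{\ell*})^{T}$ and the at most $k_n$ distinct vectors $X^{(n)}(O^{(n)})^{T}(\mathcal{X}^{(n)}_{\ell*})^{T}$; thus $|\mathcal{P}|\le n+k_n$.

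Next I would invoke the standard distributional JL bound for Gaussian $R$ with entry variance $1/d_n$: for a fixed $v$, $\lVert v^{T}R\rVert_2^2\notin(1\pm\epsilon_1)\lVert v\rVert_2^2$ with probability at most $2\exp\!\big(-\tfrac{d_n}{2}(\epsilon_1^2/2-\epsilon_1^3/3)\big)$. A union bound over the at most $\binom{n+k_n}{2}<(n+k_n)^2/2$ pairwise difference vectors coming from $\mathcal{P}$, together with the hypothesis $d_n>\tfrac{4+2\beta}{\epsilon_1^2/2-\epsilon_1^3/3}\log(n+k_n)$, bounds the total failure probability by $n^{-\beta}$. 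On the complementary good event, the first claim follows from $\lVert X^{(n)}_R-\mathcal{X}^{(n)}_R\rVert_F^2=\sum_i\lVert u_i^{T}R\rVert_2^2\le(1+\epsilon_1)\sum_i\lVert u_i\rVert_2^2=(1+\epsilon_1)\lVert X^{(n)}-\mathcal{X}^{(n)}O^{(n)}\rVert_F^2$, which is $o\big((\log n)^2/(n\bar\lambda_{k_n}^4\tau_n^4)\big)$ by Theorem~\ref{Th_conv} since the constant $1+\epsilon_1$ does not affect the $o(\cdot)$; the second claim follows, for $Z_{i*}\ne Z_{j*}$, from $\lVert{\mathcal{X}_R}^{(n)}_{i*}-{\mathcal{X}_R}^{(n)}_{j*}\rVert_2=\lVert w_{ij}^{T}R\rVert_2\ge\sqrt{1-\epsilon_1}\,\lVert w_{ij}\rVert_2\ge(1-\epsilon_1)\lVert\mathcal{X}^{(n)}_{i*}-\mathcal{X}^{(n)}_{j*}\rVert_2\ge(1-\epsilon_1)\sqrt{2/P_n}$ by Theorem~\ref{Th_sep} (statement~3). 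Finally I would note that the almost-sure event of Theorem~\ref{Th_conv} is governed by the randomness in $W^{(n)}$, which is independent of $R$, so its intersection with the JL good event still has probability at least $1-n^{-\beta}$, and both conclusions hold on it.

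The main obstacle is essentially bookkeeping rather than depth: arranging the transposes and the isometry $v\mapsto X^{(n)}v$ so that all the $u_i$ and $w_{ij}$ are genuine differences of members of a set of size $\le n+k_n$ --- this is precisely what keeps $\log(n+k_n)$, rather than $\log(n+k_n^2)$, in the bound on $d_n$ --- and tracking the JL tail constants closely enough to land exactly on the coefficient $\tfrac{4+2\beta}{\epsilon_1^2/2-\epsilon_1^3/3}$. A secondary subtlety to handle carefully is the mismatch between the asymptotic/a.s.\ form of Theorem~\ref{Th_conv} and the finite-$n$, high-probability statement being proved here; the two are reconciled by the independence of the two randomness sources and by the fact that $1+\epsilon_1=O(1)$.
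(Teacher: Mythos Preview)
Your proposal is correct and follows essentially the same approach as the paper: both apply the Johnson--Lindenstrauss lemma (in the form of Achlioptas' Theorem~1.1) to the $n+k_n$ points given by the rows of $XX^T$ and the distinct rows of $\mathcal{X}OX^T$, then use the isometry $v\mapsto Xv$ (from $X^{T}X=I_{k_n}$) together with Theorem~\ref{Th_conv} and Theorem~\ref{Th_sep} to obtain the convergence and separation bounds, respectively. Your bookkeeping is in fact a bit cleaner than the paper's --- you make explicit that each relevant vector is a \emph{difference} of two members of a single set $\mathcal{P}$ of size at most $n+k_n$, and you separate the squared versus unsquared JL formulations and the two sources of randomness --- but the argument is the same.
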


\begin{proof}
See Appendix \ref{appB}.
\end{proof}
Theorem \ref{Th_compressedconvSep} is analogous to the theorems on convergence (Theorem \ref{Th_conv})
and separability (Theorem \ref{Th_sep}) of the spectral clustering algorithm. It ensures that the approximate
spectral embedding $X_R^{(n)}$ converges to the corresponding population version $\mathcal{X}^{(n)}_R$
while still ensuring that the true clusters remain separable.

\subsection{Efficient Computation via Fast Graph Filtering}\label{sec_FGF}

Now, we define an additional level of approximation for the spectral embedding using the fast graph filtering technique discussed in Section \ref{subsec_GraphFiltering}. 
Let $\widetilde{X}_R := \widetilde{H}_{\lambda_k} R = \sum_{{\ell}=0}^p \alpha_{\ell} L^{\ell} R$ to be the output of approximate filtering of the columns of $R$ where $R \in \mathbb{R}^{n \times d}$ with entries drawn from $\mathcal{N}(0, \frac{1}{d})$. Lemma \ref{Lem_ApproxFil} bounds the difference between $\widetilde{X}_R$ and $X_R$, which result from approximate and ideal filtering respectively.

\begin{lemma}[Bounding the Approximate Filtering Error]\label{Lem_ApproxFil}
For the sequence of adjacency matrices as defined in Theorem \ref{Th_conv}, let $\widetilde{X}^{(n)}_R \in \mathbb{R}^{n \times d_n}$ be the approximation for ${X}^{(n)}_R$ obtained using the polynomial filter $\widetilde{h}_{\lambda_{k_n}}$ (instead of the ideal filter $h_{\lambda_{k_n}}$). Let $\sigma(L^{(n)})$ be the spectrum of the sampled graph Laplacian $L^{(n)}$. Define the maximum absolute error in the polynomial approximation as
$e_n = \max_{\lambda \in \sigma(L^{(n)})} | \widetilde{h}_{\lambda_{k_n}}(\lambda) - h_{\lambda_{k_n}}(\lambda) |$.
For $\epsilon_2 \in[0,1]$, with probability at least $1-e^{-n d_n(\epsilon_2^2 - \epsilon_2^3)/4}$,
\begin{equation*}
\lVert \widetilde{X}_R^{(n)} - X_R^{(n)} \rVert_F^2 \leq (1+\epsilon_2) n^2 e_n^2.
\end{equation*}
\end{lemma}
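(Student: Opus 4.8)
The plan is to isolate the randomness by writing the filtering error as a single perturbation matrix acting on $R$, then to bound that matrix deterministically and $R$ probabilistically. Since $\widetilde{h}_{\lambda_{k_n}}$ is a polynomial, $\widetilde{H}_{\lambda_{k_n}} = \sum_{\ell=0}^{p}\alpha_\ell (L^{(n)})^\ell$ is a polynomial in $L^{(n)}$ and hence has the same eigenbasis $U$ as $L^{(n)}$; writing $L^{(n)} = U\Lambda U^{T}$ this gives
\begin{equation*}
\widetilde{X}_R^{(n)} - X_R^{(n)} = \big(\widetilde{H}_{\lambda_{k_n}} - H_{\lambda_{k_n}}\big) R = U\,\mathrm{diag}\big(\widetilde{h}_{\lambda_{k_n}}(\lambda_i) - h_{\lambda_{k_n}}(\lambda_i)\big)_{i=1}^{n}\,U^{T} R =: E^{(n)} R.
\end{equation*}
By unitary invariance of the Frobenius norm, $\lVert E^{(n)}\rVert_F^2 = \sum_{i=1}^{n}\lvert \widetilde{h}_{\lambda_{k_n}}(\lambda_i) - h_{\lambda_{k_n}}(\lambda_i)\rvert^2 \le n\,e_n^2$, since each $\lambda_i \in \sigma(L^{(n)})$ and so each summand is at most $e_n^2$ by the definition of $e_n$.

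Next I would apply submultiplicativity of the Frobenius norm, $\lVert E^{(n)} R\rVert_F \le \lVert E^{(n)}\rVert_F\,\lVert R\rVert_F$, to obtain $\lVert \widetilde{X}_R^{(n)} - X_R^{(n)}\rVert_F^2 \le n\,e_n^2\,\lVert R\rVert_F^2$, so it remains only to control $\lVert R\rVert_F^2$. The $n d_n$ entries of $R$ are i.i.d.\ $\mathcal{N}(0, 1/d_n)$, hence $d_n\lVert R\rVert_F^2$ is $\chi^2$-distributed with $n d_n$ degrees of freedom and $\mathbb{E}\lVert R\rVert_F^2 = n$. The standard chi-squared upper-tail estimate (the same inequality underlying the Johnson--Lindenstrauss lemma), $\Pr[\chi^2_m \ge (1+\epsilon)m] \le \exp(-m(\epsilon^2-\epsilon^3)/4)$ for $\epsilon \in (0,1]$, applied with $m = n d_n$ and $\epsilon = \epsilon_2$, yields $\Pr\big[\lVert R\rVert_F^2 > (1+\epsilon_2)n\big] \le \exp(-n d_n(\epsilon_2^2-\epsilon_2^3)/4)$. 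On the complementary event we get $\lVert \widetilde{X}_R^{(n)} - X_R^{(n)}\rVert_F^2 \le n\,e_n^2 \cdot (1+\epsilon_2)n = (1+\epsilon_2)\,n^2 e_n^2$, which is the claimed bound.

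I do not expect a genuine obstacle: the argument is a deterministic operator bound followed by an off-the-shelf Gaussian concentration inequality. The only point requiring care is the bookkeeping of constants — one must invoke the sharp chi-squared tail bound rather than a generic sub-exponential one so that the exponent comes out as exactly $n d_n(\epsilon_2^2-\epsilon_2^3)/4$. I would also remark that the step $\lVert E^{(n)} R\rVert_F \le \lVert E^{(n)}\rVert_F\,\lVert R\rVert_F$ is deliberately loose: replacing $\lVert E^{(n)}\rVert_F$ by the operator norm $e_n = \lVert E^{(n)}\rVert$ would tighten the bound by a factor of $n$, but the weaker estimate already suffices for the subsequent consistency analysis and keeps the probabilistic statement clean, so I would not optimize it further.
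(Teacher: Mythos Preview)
Your proposal is correct and follows essentially the same route as the paper: write $\widetilde{X}_R - X_R = U\big(\widetilde{h}_{\lambda_{k_n}}(\Lambda)-h_{\lambda_{k_n}}(\Lambda)\big)U^{T}R$, bound the Frobenius norm of the diagonal perturbation by $n e_n^2$ via unitary invariance, apply submultiplicativity, and then control $\lVert R\rVert_F^2$ by the chi-squared Chernoff bound to get the stated exponent. Your closing remark that the operator-norm bound $\lVert E^{(n)}\rVert = e_n$ would save a factor of $n$ is a valid observation that the paper does not make.
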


\begin{proof}
See Appendix \ref{appB}.
\end{proof}

\subsection{Estimation of $\lambda_k$}\label{Sec_LambdakEst}

Lemma \ref{Lem_ApproxFil} shows that in order to achieve a fixed error bound between $X_R$ and $\widetilde{X}_R$, the polynomial approximation must be increasingly accurate  as $n$ grows large. Designing such a polynomial would necessitate knowing the value of $\lambda_k$. In this section, we explain how that can be done without having to do the eigen decomposition of $L$. First, we state the following Lemma which bounds 
the output of fast graph filtering, $\widetilde{X}_R$.

\begin{lemma}[Estimation of $\lambda_k$]\label{Lem_lambda_k}
For the $\widetilde{X}_R^{(n)}$, $e_n$ and $\epsilon_2$ given in Lemma \ref{Lem_ApproxFil}, with probability at least $1-e^{-n d_n(\epsilon_2^2 - \epsilon_2^3)/4}$ we have
\begin{align*}
(1-\epsilon_2)k_n - 2(1+\epsilon_2)k_n e_n 
\leq \frac{1}{n} \lVert \widetilde{X}_R^{(n)} \rVert_F^2 
\leq (1+\epsilon_2)(k_n + 2k_n e_n + ne_n^2).
\end{align*}
\end{lemma}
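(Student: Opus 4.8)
The plan is to diagonalise the polynomial filter operator in the eigenbasis of $L^{(n)}$ and to expand the squared Frobenius norm term by term. Write $L^{(n)} = U\Lambda U^{T}$ with eigenvalues $\lambda_1,\dots,\lambda_n$ ordered by decreasing modulus and columns $u_1,\dots,u_n$ of $U$; then $\widetilde{X}_R^{(n)} = \widetilde{H}_{\lambda_{k_n}}R = U\,\widetilde{h}_{\lambda_{k_n}}(\Lambda)\,U^{T}R$, so that
\begin{equation*}
\lVert \widetilde{X}_R^{(n)}\rVert_F^2 = \sum_{i=1}^{n} \widetilde{h}_{\lambda_{k_n}}(\lambda_i)^2\,\lVert R^{T}u_i\rVert_2^2 .
\end{equation*}
I would then invoke the elementary identity $\widetilde{h}^2 = h^2 + 2h(\widetilde{h}-h) + (\widetilde{h}-h)^2$, together with the facts that the ideal filter $h_{\lambda_{k_n}}$ is the $\{0,1\}$-indicator of the top $k_n$ frequencies (so $h_{\lambda_{k_n}}^2 = h_{\lambda_{k_n}}$ and it is supported exactly on the eigenspace spanned by the columns of $X^{(n)}$), that $\sum_{i\le k_n}\lVert R^{T}u_i\rVert_2^2 = \lVert (X^{(n)})^{T}R\rVert_F^2 = \lVert X_R^{(n)}\rVert_F^2$ and $\sum_{i=1}^{n}\lVert R^{T}u_i\rVert_2^2 = \lVert R\rVert_F^2$, and that $|\widetilde{h}_{\lambda_{k_n}}(\lambda_i) - h_{\lambda_{k_n}}(\lambda_i)| \le e_n$ for every $i$. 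This splits $\lVert \widetilde{X}_R^{(n)}\rVert_F^2$ into a main term $\lVert X_R^{(n)}\rVert_F^2$, a cross term of modulus at most $2 e_n \lVert X_R^{(n)}\rVert_F^2$, and a remainder lying in $[0,\, e_n^2\lVert R\rVert_F^2]$.

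The second step is concentration. Since $X^{(n)}$ has orthonormal columns, the entries of $(X^{(n)})^{T}R$ are i.i.d.\ $\mathcal{N}(0,1/d_n)$, so $d_n\lVert X_R^{(n)}\rVert_F^2$ is a $\chi^2$ variable with $k_n d_n$ degrees of freedom; the two-sided Laurent--Massart / Johnson--Lindenstrauss tail bound then pins $\lVert X_R^{(n)}\rVert_F^2$ in $[(1-\epsilon_2)k_n,\,(1+\epsilon_2)k_n]$ off an event of probability at most a constant times $e^{-k_n d_n(\epsilon_2^2-\epsilon_2^3)/4}$. Likewise $d_n\lVert R\rVert_F^2 \sim \chi^2_{n d_n}$, and $\lVert R\rVert_F^2 \le (1+\epsilon_2)n$ on exactly the event already used in Lemma~\ref{Lem_ApproxFil}, which has probability at least $1 - e^{-n d_n(\epsilon_2^2-\epsilon_2^3)/4}$. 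Substituting into the decomposition gives, on the intersection of these events, the upper estimate $\lVert \widetilde{X}_R^{(n)}\rVert_F^2 \le (1+2e_n)\lVert X_R^{(n)}\rVert_F^2 + e_n^2\lVert R\rVert_F^2 \le (1+\epsilon_2)\big(k_n + 2k_n e_n + n e_n^2\big)$ and the lower estimate $\lVert \widetilde{X}_R^{(n)}\rVert_F^2 \ge (1-2e_n)\lVert X_R^{(n)}\rVert_F^2 \ge (1-\epsilon_2)k_n - 2(1+\epsilon_2)k_n e_n$ (using $1-\epsilon_2 \le 1+\epsilon_2$ in the last step), which are the two-sided bounds asserted in the lemma; a union bound over the two concentration events produces a probability of the stated form.

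The part I expect to need the most care is keeping the cross term at order $k_n e_n$ rather than $n e_n$: this is precisely where one must exploit that $h_{\lambda_{k_n}}$ is supported on the $k_n$-dimensional top eigenspace, so that the product $h_{\lambda_{k_n}}(\widetilde{h}_{\lambda_{k_n}}-h_{\lambda_{k_n}})$ lives there too and is bounded pointwise by $e_n$, which is what converts a crude $2e_n\lVert R\rVert_F^2$ into the much smaller $2e_n\lVert X_R^{(n)}\rVert_F^2$. The only other delicate point is the probability bookkeeping — consolidating the $\chi^2_{k_n d_n}$ deviation of $\lVert X_R^{(n)}\rVert_F^2$ with the $\chi^2_{n d_n}$ event of Lemma~\ref{Lem_ApproxFil} into the single exponential in the statement — which goes through smoothly because the same parameters $(e_n,\epsilon_2,d_n)$ and the same random matrix $R$ are reused from that lemma.
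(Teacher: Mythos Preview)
Your decomposition is correct and in fact tidier than the paper's. Both arguments rest on the same two ingredients—the $\chi^2$ concentration of $\lVert R\rVert_F^2$ and of $\lVert X_R\rVert_F^2=\lVert X^{T}R\rVert_F^2$, together with the observation that $h_{\lambda_{k_n}}(\widetilde h_{\lambda_{k_n}}-h_{\lambda_{k_n}})$ is supported on the top-$k_n$ eigenspace, which is exactly what pins the cross term at order $k_n e_n$. The difference is packaging: you run the single pointwise expansion $\widetilde h^2=h^2+2h(\widetilde h-h)+(\widetilde h-h)^2$ for both directions, whereas the paper handles the two sides asymmetrically, using the trace inequality $\mathrm{tr}(AB)\le\mathrm{tr}(A)\,\mathrm{tr}(B)$ for the upper bound and the polarization identity $\lVert\widetilde X_R\rVert_F^2=\lVert X_R\rVert_F^2+\lVert\widetilde X_R-X_R\rVert_F^2+2\,\mathrm{tr}\big(X_R^{T}(\widetilde X_R-X_R)\big)$ (plus an add-and-subtract of $e_nI_n$ inside the cross trace) for the lower bound. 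Your route avoids the trace inequality entirely and is more transparent.

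Two points to watch. First, your concentration step places $\lVert X_R\rVert_F^2$ near $k_n$, so the inequalities you derive are for $\lVert\widetilde X_R\rVert_F^2$ itself, not for $\tfrac{1}{n}\lVert\widetilde X_R\rVert_F^2$ as in the lemma statement; the paper obtains the extra factor of $n$ by writing $\lVert X_R\rVert_F^2=k_n\lVert R\rVert_F^2$ and then invoking only the single $\chi^2_{nd_n}$ event on $\lVert R\rVert_F^2$. You should compare that identity with your own computation $\lVert X_R\rVert_F^2=\lVert X^{T}R\rVert_F^2\sim\chi^2_{k_nd_n}/d_n$ and decide which normalisation you actually want to claim. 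Second, and relatedly, the union bound you describe is governed by the $\chi^2_{k_nd_n}$ tail, which gives failure probability of order $e^{-k_nd_n(\epsilon_2^2-\epsilon_2^3)/4}$ rather than the $e^{-nd_n(\epsilon_2^2-\epsilon_2^3)/4}$ in the statement; the paper's single-event route is precisely how it arrives at the exponent $nd_n$, so this discrepancy and the previous one have the same source.
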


\begin{proof}
See Appendix \ref{appB}.
\end{proof}
For $(2ke_n + ne_n^2) = o(1)$, Lemma \ref{Lem_lambda_k} shows that the output of fast graph filtering, $\widetilde{X}_R$ is tightly concentrated around $k$, upon normalization by $n$. This can be used to estimate  $|\lambda_k|$ by a dichotomic search in the range $[0,1]$ as explained in \cite{puy2016random}. The basic idea is to make a coarse initial guess on $|\lambda_k|$ in the interval $[0,1]$, compute $\widetilde{X}_R$ with the current estimate, and iteratively refine the estimate by comparing $\frac{1}{n} \lVert \widetilde{X}_R \rVert_F^2$ with $k$.

Before we move on to proving the consistency of Algorithm \ref{SC-GF_alg}, let us summarise the results from the previous sections. We have a tractable way to estimate $|\lambda_k|$ without the eigen decomposition of $L$. Through Lemma \ref{Lem_ApproxFil}, we know that the resultant approximate embedding will be close to the ideal compressed embedding, for reasonably accurate polynomial approximation of the ideal filter. Through Theorem \ref{Th_compressedconvSep}, we showed that a compressed embedding of the $k$ leading eigenvectors of $L$ converge to the corresponding embedding on $\mathscr{L}$. We also showed that the data points corresponding to different clusters are still separable under such an embedding.

\section{Consistency of Algorithm SC-GF}\label{sec_consistency}

\subsection{Deriving the Error Bound}

Once we get the approximate spectral embedding of the $n$
nodes of the graph in the form of $\widetilde{X}_R$,
we perform $k$-means with the rows of $\widetilde{X}_R$
as data points in $\mathbb{R}^{d}$. Let 
$c_1, \cdots, c_n \in \mathbb{R}^{d}$ be the centroids
corresponding to the $n$ rows of $\widetilde{X}_R$, out of which
only $k$ are unique. The $k$ unique centroids correspond to the 
centers of the $k$ clusters. Note that the true cluster centers
correspond to the rows of $\mathcal{X}_R$, and Theorem
\ref{Th_compressedconvSep} ensures that they are separable from
each other. Hence, we say that a node $i$ is correctly clustered
if its $k$-means cluster center $c_i$ is closer to its true cluster
center $\mathcal{X}_{R_{i*}}$ than it is to any other center
$\mathcal{X}_{R_{j*}}$, for $j \neq i$. In the following Lemma,
we lay down the sufficient condition for correctly clustering a node $i$.

\begin{lemma}[Sufficient Condition for Correct Clustering]\label{Lem_sufCondClus}
Let $c_1^{(n)}, \cdots, c_n^{(n)} \in \mathbb{R}^{d_n}$ be
the centroids resulting from performing $k_n$-means on the rows of $\widetilde{X}_R^{(n)}$.
For $P_n$ and $\epsilon_1$ as defined in Theorem \ref{Th_compressedconvSep},
\begin{align*}
\lVert c_i^{(n)} - \mathcal{X}_{R_{i*}}^{(n)} \rVert_2 < (1-\epsilon_1)\frac{1}{\sqrt{2P_n}} 
\Rightarrow
\lVert c_i^{(n)} - \mathcal{X}_{R_{i*}}^{(n)} \rVert_2 <
\lVert c_i^{(n)} - \mathcal{X}_{R_{j*}}^{(n)} \rVert_2.
\end{align*}
for any $z_i \neq z_j$.
\end{lemma}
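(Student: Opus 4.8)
The plan is to obtain the claim directly from the triangle inequality, using as the only external input the separability bound established in Theorem~\ref{Th_compressedconvSep}. The crucial arithmetic observation, which I would record at the outset, is that the radius appearing in the hypothesis is exactly half of the guaranteed minimum distance between distinct true cluster centres, since
\[
(1-\epsilon_1)\sqrt{2/P_n} \;=\; 2(1-\epsilon_1)\frac{1}{\sqrt{2P_n}}.
\]

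Next I would fix $i,j \in \{1,\dots,n\}$ with $z_i \neq z_j$ and apply the reverse triangle inequality to the three points $c_i^{(n)}$, $\mathcal{X}_{R_{i*}}^{(n)}$, $\mathcal{X}_{R_{j*}}^{(n)}$, giving
\[
\lVert c_i^{(n)} - \mathcal{X}_{R_{j*}}^{(n)} \rVert_2 \;\geq\; \lVert \mathcal{X}_{R_{i*}}^{(n)} - \mathcal{X}_{R_{j*}}^{(n)} \rVert_2 \;-\; \lVert c_i^{(n)} - \mathcal{X}_{R_{i*}}^{(n)} \rVert_2 .
\]
Into the right-hand side I would substitute the separability bound $\lVert \mathcal{X}_{R_{i*}}^{(n)} - \mathcal{X}_{R_{j*}}^{(n)} \rVert_2 \geq 2(1-\epsilon_1)\frac{1}{\sqrt{2P_n}}$ from Theorem~\ref{Th_compressedconvSep} together with the hypothesis $\lVert c_i^{(n)} - \mathcal{X}_{R_{i*}}^{(n)} \rVert_2 < (1-\epsilon_1)\frac{1}{\sqrt{2P_n}}$, obtaining
\[
\lVert c_i^{(n)} - \mathcal{X}_{R_{j*}}^{(n)} \rVert_2 \;>\; 2(1-\epsilon_1)\frac{1}{\sqrt{2P_n}} - (1-\epsilon_1)\frac{1}{\sqrt{2P_n}} \;=\; (1-\epsilon_1)\frac{1}{\sqrt{2P_n}} \;>\; \lVert c_i^{(n)} - \mathcal{X}_{R_{i*}}^{(n)} \rVert_2 ,
\]
which is precisely the desired implication. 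Since $i,j$ were arbitrary subject to $z_i\neq z_j$, this completes the argument.

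There is no genuinely hard step here; the proof is essentially a one-line triangle inequality. The only points requiring care are (i) the constant bookkeeping, i.e.\ confirming that the hypothesis radius is exactly half the inter-centre gap, so that the ball of that radius around $\mathcal{X}_{R_{i*}}^{(n)}$ contains no other true centre, and (ii) noting that this lemma is stated deterministically and is to be read on the high-probability event (of probability at least $1-n^{-\beta}$) on which the conclusion of Theorem~\ref{Th_compressedconvSep} holds, so that no fresh randomness enters at this stage.
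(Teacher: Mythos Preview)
Your proposal is correct and follows essentially the same argument as the paper: both apply the reverse triangle inequality to $c_i^{(n)}$, $\mathcal{X}_{R_{i*}}^{(n)}$, $\mathcal{X}_{R_{j*}}^{(n)}$ and invoke the separability bound from Theorem~\ref{Th_compressedconvSep}, using the identity $(1-\epsilon_1)\sqrt{2/P_n}=2(1-\epsilon_1)/\sqrt{2P_n}$. Your additional remark about the result holding on the high-probability event from Theorem~\ref{Th_compressedconvSep} is a sensible clarification that the paper leaves implicit.
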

\begin{proof}
See Appendix \ref{appC}.
\end{proof}
Following the analysis in \citep{rohe2011spectral}, we define
the set of misclustered vertices $\mathscr{M}$ as containing
the vertices that do not satisfy the sufficient condition in 
Lemma \ref{Lem_sufCondClus}.
\begin{align*}
\mathscr{M} = \Big\{i: \lVert c_i^{(n)} - \mathcal{X}_{R_{i*}}^{(n)} \rVert_2 \geq (1-\epsilon_1)\frac{1}{\sqrt{2P_n}} \Big\}
\end{align*}
Now that we have the definition for misclustered vertices,
we analyze the performance of $k$-means. Let the matrix
$C \in \mathbb{R}^{n \times k}$ be the result of $k$-means clustering
where the $i$th row, $c_i$ is the centroid corresponding to
the $i$th vertex. $C \in \mathscr{C}_{n,k}$ where $\mathscr{C}_{n,k}$
represents the family of matrices with $n$ rows out of which
only $k$ are unique. $C$ can be defined as
\begin{align*}
C = \mathop{\mathrm{arg} \min}_{M \in \mathscr{C}_{n,k}} \lVert M - \widetilde{X}_R \rVert_F^2.
\end{align*}

The next theorem bounds the number of misclustered vertices, 
that is the size of the set $\mathscr{M}$.
\begin{theorem}[Bound on the number of Misclustered Vertices]\label{Th_miscl}

\begin{align}\label{eq_miscl_big}
|\mathscr{M}| = o\bigg(P_n \big( \frac{(\log n)^2}{n \bar{\lambda}_{k_n}^4 \tau_n^4} + n^2 e_n^2 \big) \bigg)
\end{align}

\end{theorem}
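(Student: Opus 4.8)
The plan is to bound $|\mathscr{M}|$ by relating the sum of squared deviations $\sum_i \lVert c_i - \mathcal{X}_{R_{i*}} \rVert_2^2$ to the quantities already controlled in the previous results, and then invoking the definition of $\mathscr{M}$ to convert that sum into a cardinality bound. Concretely, for each $i \in \mathscr{M}$ we have $\lVert c_i - \mathcal{X}_{R_{i*}} \rVert_2 \geq (1-\epsilon_1)/\sqrt{2P_n}$, so
\begin{equation*}
|\mathscr{M}| \cdot \frac{(1-\epsilon_1)^2}{2P_n} \leq \sum_{i \in \mathscr{M}} \lVert c_i - \mathcal{X}_{R_{i*}} \rVert_2^2 \leq \lVert C - \mathcal{X}_R \rVert_F^2.
\end{equation*}
So it suffices to show $\lVert C - \mathcal{X}_R \rVert_F^2 = o\big((\log n)^2/(n\bar\lambda_{k_n}^4\tau_n^4) + n^2 e_n^2\big)$; multiplying through by $P_n$ (up to the constant $2/(1-\epsilon_1)^2$, which is absorbed into the $o(\cdot)$) then gives \eqref{eq_miscl_big}.

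The main work is therefore bounding $\lVert C - \mathcal{X}_R \rVert_F$. I would use the triangle inequality to split this as
\begin{equation*}
\lVert C - \mathcal{X}_R \rVert_F \leq \lVert C - \widetilde{X}_R \rVert_F + \lVert \widetilde{X}_R - X_R \rVert_F + \lVert X_R - \mathcal{X}_R \rVert_F.
\end{equation*}
The third term is $o\big(\sqrt{(\log n)^2/(n\bar\lambda_{k_n}^4\tau_n^4)}\big)$ by Theorem \ref{Th_compressedconvSep}; the second term is $O(\sqrt{n^2 e_n^2}) = O(n e_n)$ (up to the $(1+\epsilon_2)^{1/2}$ factor) by Lemma \ref{Lem_ApproxFil}, which holds with probability at least $1 - e^{-nd_n(\epsilon_2^2-\epsilon_2^3)/4}$. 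For the first term, since $C$ is the minimizer of $\lVert M - \widetilde{X}_R \rVert_F^2$ over $\mathscr{C}_{n,k}$ and $\mathcal{X}_R$ has only $k$ distinct rows (it equals $\mathcal{X} O X^T R$ and $\mathcal{X}$ has $k$ distinct rows by Theorem \ref{Th_sep}, a structure preserved by right-multiplication by $O X^T R$), we have $\mathcal{X}_R \in \mathscr{C}_{n,k}$, hence $\lVert C - \widetilde{X}_R \rVert_F \leq \lVert \mathcal{X}_R - \widetilde{X}_R \rVert_F$. Thus $\lVert C - \widetilde{X}_R \rVert_F \leq \lVert \mathcal{X}_R - \widetilde{X}_R \rVert_F \leq \lVert \mathcal{X}_R - X_R \rVert_F + \lVert X_R - \widetilde{X}_R \rVert_F$, so the first term is dominated by (twice) the sum of the other two. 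Squaring, using $(a+b+c)^2 \leq 3(a^2+b^2+c^2)$, and combining the two rates gives
\begin{equation*}
\lVert C - \mathcal{X}_R \rVert_F^2 = o\bigg(\frac{(\log n)^2}{n \bar{\lambda}_{k_n}^4 \tau_n^4}\bigg) + O(n^2 e_n^2),
\end{equation*}
which absorbs into the claimed $o(\cdot)$ form after multiplying by $P_n$.

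The expected obstacle is not the algebra but the bookkeeping of the high-probability qualifiers: Theorem \ref{Th_compressedconvSep} holds with probability $1 - n^{-\beta}$ and Lemma \ref{Lem_ApproxFil} with probability $1 - e^{-nd_n(\epsilon_2^2-\epsilon_2^3)/4}$, while Theorem \ref{Th_conv} is an almost-sure statement, so I would need to argue that on the intersection of these events (whose probability tends to $1$, given $d_n = \Omega(\log n)$) all three bounds hold simultaneously, and state \eqref{eq_miscl_big} as holding with the same probability. A secondary subtlety is confirming that the $o(\cdot)$ from Theorem \ref{Th_compressedconvSep} and the constants $2/(1-\epsilon_1)^2$, $(1+\epsilon_2)$, and the factor $3$ from the triangle-inequality expansion are genuinely constants (independent of $n$), so they are legitimately swallowed by the asymptotic notation; this is where I would be careful to fix $\epsilon_1, \epsilon_2, \beta$ once and for all before taking $n \to \infty$.
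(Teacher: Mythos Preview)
Your proposal is correct and follows essentially the same route as the paper: use the definition of $\mathscr{M}$ to convert $|\mathscr{M}|$ into a multiple of $\lVert C - \mathcal{X}_R \rVert_F^2$, note that $\mathcal{X}_R \in \mathscr{C}_{n,k}$ so $k$-means optimality gives $\lVert C - \widetilde{X}_R \rVert_F \le \lVert \mathcal{X}_R - \widetilde{X}_R \rVert_F$, and then split via the triangle inequality into the pieces controlled by Theorem~\ref{Th_compressedconvSep} and Lemma~\ref{Lem_ApproxFil}. The only cosmetic difference is that the paper works with squared norms throughout (using $\lVert a+b\rVert^2 \le 2\lVert a\rVert^2 + 2\lVert b\rVert^2$) rather than applying the triangle inequality on norms and squaring at the end, and the paper is silent on the high-probability bookkeeping that you (rightly) flag.
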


\begin{proof}
See Appendix \ref{appC}.
\end{proof}
\subsection{Consistency in Special Cases}\label{Sec_SplCases}

We consider a simplified SBM with four parameters
$k$, $q$, $r$ and $s$ with $k$ blocks each of which contains $s$ nodes so that
the total number of vertices in the graph, $n = ks$. The probability of an edge between
two vertices of the same block is given by $q+r \in [0,1]$ and that of different blocks 
is given by $r \in [0,1]$. For the simplified SBM, the population of the largest block,
$P_n = s$. The smallest non-zero eigenvalue of the sampled graph Laplacian $L$ is given by
$\bar{\lambda}_{k_n} = \frac{1}{k(r/q)+1}$
and the parameter $\tau_n = q/k + r$ \citep{rohe2011spectral}. The proportion of the 
misclustered vertices is given by
\begin{align}\label{eq_miscl}
\frac{|\mathscr{M}|}{n} = o\Big( \frac{k^3}{n}(\log n)^2 + \frac{n^2}{k}e_n^2 \Big).
\end{align}
For weak consistency, we need $\lim_{n\to\infty} \frac{|\mathscr{M}|}{n} = 0$.
From \eqref{eq_miscl}, the condition on the number of clusters for weak consistency is $k = o(n^{1/3}/(\log n)^{2/3})$ and the worst case condition on the polynomial approximation error is $e_n = o(n^{-5/6}(\log n)^{1/3})$.

\section{Experiments}

We perform experiments on the simplified four parameter SBM presented
in Section \ref{Sec_SplCases}. For polynomial approximation of the ideal filter,
we use Chebyshev polynomials with Jackson damping coefficients \citep{di2016efficient}.

In our first experiment, we analyze the error rate for Algorithm~\ref{SC-GF_alg} for fixed number
of clusters as the number of nodes is increased. As expected, the proportion of misclustered
vertices, $\frac{|\mathscr{M}|}{n}$ tends to zero as $n$ grows large. However, for the case 
of high polynomial error ($p=5$) we see that the error rate diverges. This validates
the presence of $e_n$ in \eqref{eq_miscl}.

In our second experiment, we analyze the effect of the polynomial error $e_n$ in finer detail, by fixing all the
other variables, $n$, $k$, $q$ and $r$. From \eqref{eq_miscl} the proportion of misclustered vertices
should grow linearly with the squared polynomial error $e_n^2$. From Figure \ref{fig_polylin}, this 
behavior is evident.

\begin{figure}
\centering
    \includegraphics[width=.7\textwidth]{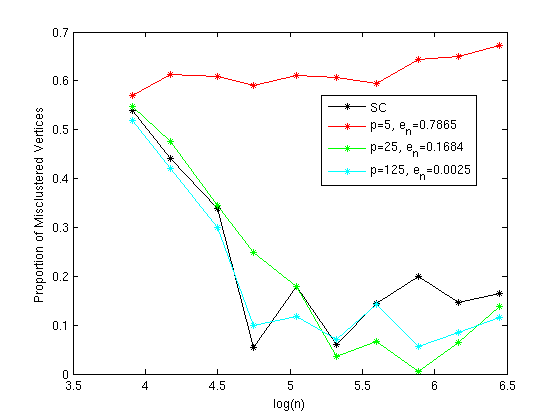}
\caption{Proportion of misclustered vertices plotted against the number of vertices.
$q=0.3$ and $r=0.1$. The polynomial order $p$ is set to $5$, $25$ and $125$ for the three curves pertaining
to Algorithm~\ref{SC-GF_alg}. The corresponding polynomial error $e_n$ is shown in the legend.}
\label{fig_miscl2}
\end{figure}
\begin{figure}
\centering
    \includegraphics[width=.7\textwidth]{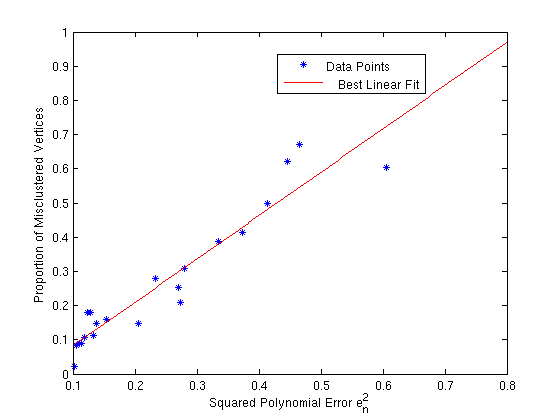}
\caption{Proportion of misclustered vertices plotted against the squared polynomial error, $e_n^2$.
$q=0.3$ and $r=0.1$. The polynomial order $p$ is varied from $5$ to $25$ linearly.}
\label{fig_polylin}
\end{figure}

\section{Conclusion}
In this paper, we prove some basic theorems that provide the theoretical basis for
spectral clustering done via graph filtering.
By Theorem~\ref{Th_compressedconvSep}, we prove the fundamental conditions required for the
consistency of the spectral clustering algorithm via graph filtering, namely separability and convergence.
By Theorem~\ref{Th_miscl}, we have shown that the algorithm
can retrieve the planted clusters in a stochastic block model consistently, and derive a bound on the number of
misclustered vertices. Through Lemma~\ref{Lem_ApproxFil}
and Lemma~\ref{Lem_lambda_k}, we quantify the maximum tolerable filtering error for the algorithm to succeed.
We then validate our results by performing experiments on the simulated stochastic block model.

While the results we prove in this paper provide evidence for the weak consistency of 
Algorithm~\ref{SC-GF_alg} under the stochastic block model under certain assumptions on sparsity and separability,
several problems still remain open. First, the bound on the accuracy of the $\lambda_k$ estimate as given in
Lemma \ref{Lem_lambda_k} is derived in terms of the polynomial approximation error $e_n$.
However, it is not trivial to estimate the polynomial order required to achieve a specific 
absolute error ($L_1$ norm) even in case of popular choices like the Jackson-Chebyshev polynomials \cite{di2016efficient}. This results in complications in deriving explicit expressions for 
the algorithm's computational complexity.
It also remains to be seen if the algorithm remains consistent under
a milder assumption on the graph sparsity ($\tau_n$) as is the case with the original spectral clustering
algorithm \cite{lei2015consistency}. While it is inevitable that the approximations
involved in estimating $\lambda_k$ (Lemma \ref{Lem_lambda_k}) and in obtaining the 
approximate spectral embedding (Lemma \ref{Lem_ApproxFil}) will result in a weaker bound on the performance,
we do not know if the results we derived are optimal. With this work, we hope to see a renewed interest
in graph filtering approaches to spectral algorithms which promise significant speed-ups in computation
while (provably) maintaining almost the same performance.

\clearpage
\vskip 0.2in
\bibliography{paper}

\appendix

\section{Proofs for Theorems in Section~\ref{sec_SCviaGF}}
\label{appB}

\subsection{Proof of Theorem \ref{Th_compressedconvSep}}
\begin{proof}
For the sake of compactness, we omit the superscript $(n)$
for the sequences of matrices, as the analysis is valid at
every $n$.

By Theorem \ref{Th_sep}, there are at most $k$ unique rows
out of the $n$ rows of the matrix $\mathcal{X}$, while the
$n$ rows of the matrix $X$, can potentially be unique. 
The same inference can be made for the matrices 
$\mathcal{X}OX^T$ and $XX^T$, where $O$ is the 
orthonormal matrix from Theorem \ref{Th_conv}.

Treating the combined $n+k$ unique rows of the two matrices 
as data points in $\mathbb{R}^k$, we can use the 
Johnson-Lindenstrauss Lemma to approximately preserve 
the pairwise Euclidian distances between any two rows 
up to a factor of $\epsilon_1$. Applying Theorem 1.1 from 
\citet{achlioptas2003database}, if $d_n$ is larger than
\begin{equation*}
 \frac{4+2\beta}{\epsilon_1^2/2-\epsilon_1^3/3}\log (n+k),
\end{equation*}
then with probability at least $1-n^{-\beta}$, we have
\begin{align}\label{eq_jl_chi}
(1-\epsilon_1) 
\lVert {\mathcal{X}}_{i*}OX^T -{\mathcal{X}}_{j*}OX^T \rVert_2 
\leq 
\lVert {\mathcal{X}_R}_{i*} -{\mathcal{X}_R}_{j*} \rVert_2
\leq 
(1+\epsilon_1) 
\lVert {\mathcal{X}}_{i*}OX^T -{\mathcal{X}}_{j*}OX^T \rVert_2  
\end{align}
for any $Z_{i*} \neq Z_{j*}$,
\begin{align*}
(1-\epsilon_1) 
\lVert X_{i*}X^T -X_{j*}X^T \rVert_2 
\leq 
\lVert {\mathcal{X}_R}_{i*} -{\mathcal{X}_R}_{j*} \rVert_2 
\leq 
(1+\epsilon_1) 
\lVert X_{i*}X^T -X_{j*}X^T \rVert_2 
\end{align*}
and
\begin{align}\label{eq_jl_X}
(1-\epsilon_1) 
\lVert {{X}}_{i*}OX^T - {\mathcal{X}}_{j*}X^T \rVert_2 
\leq 
\lVert {{X}_R}_{i*} - {\mathcal{X}_R}_{j*} \rVert_2 
\leq 
(1+\epsilon_1) 
\lVert {{X}}_{i*}OX^T - {\mathcal{X}}_{j*}X^T \rVert_2 
\end{align}
where $i,j \in \{1, \cdots, n\}$.

Combining the inequality on the let side of \eqref{eq_jl_chi} 
with Statement 3 of Theorem \ref{Th_sep}, we get
\begin{align*}
\lVert {\mathcal{X}_R}_{i*} -{\mathcal{X}_R}_{j*} \rVert_2 
&\geq 
(1-\epsilon_1) 
\lVert {\mathcal{X}}_{i*}OX^T -{\mathcal{X}}_{j*}OX^T \rVert_2 \\
&=
(1-\epsilon_1)
\lVert ({\mathcal{X}}_{i*} -{\mathcal{X}}_{j*})OX^T \rVert_2 \\
&=
(1-\epsilon_1)
\lVert {\mathcal{X}}_{i*} -{\mathcal{X}}_{j*} \rVert_2 \\
&\geq (1-\epsilon)\sqrt{2/P_n}
\end{align*}
for any $Z_{i*} \neq Z_{j*}$. Since $X^T X$ is an identity matrix,
the rows of $OX^T$ are orthogonal. Hence, multiplication of a vector
by $OX^T$ from the right does not change the norm. By a similar
procedure, combining the inequality on the right side of 
\eqref{eq_jl_X} with Theorem \ref{Th_conv}, we get
\begin{align*}
\lVert X_R - \mathcal{X}_R \rVert_F^2
=
\sum_{i=1}^n \lVert {{X}_R}_{i*} - {\mathcal{X}_R}_{i*} \rVert_2^2 
&\leq
(1+\epsilon_1)^2
\sum_{i=1}^n 
\lVert {{X}}_{i*}OX^T - {\mathcal{X}}_{j*}X^T \rVert_2^2 \\
&=
(1+\epsilon_1)^2
\sum_{i=1}^n 
\lVert ({{X}}_{i*}O - {\mathcal{X}}_{j*})X^T \rVert_2^2 \\
&=
(1+\epsilon_1)^2
\sum_{i=1}^n 
\lVert {{X}}_{i*}O - {\mathcal{X}}_{j*} \rVert_2^2 \\
&=
(1+\epsilon_1)^2
\lVert {{X}}_{i*}O - {\mathcal{X}}_{j*} \rVert_F^2 \\
&=
o\bigg(\frac{(\log n)^2}{n \bar{\lambda}_{k_n}^4 \tau_n^4}\bigg)
\end{align*}
\end{proof}
\subsection{Proof of Lemma \ref{Lem_ApproxFil}}
\begin{proof}
Firstly, we note that $\lVert R^{(n)} \rVert_F^2$ is a chi-squared
random variable with $nd_n$ degrees of freedom and mean $n$.
Using the Chernoff bound on $\lVert R^{(n)} \rVert_F^2$, we have
\begin{align}\label{eq_RnProb}
\mathsf{Pr} \Big( \Big| \frac{1}{n} \lVert R^{(n)} \rVert_F^2 - 1 \Big| > \epsilon_2 \Big) \leq e^{-n d_n(\epsilon_2^2 - \epsilon_2^3)/4}
\end{align}
Now to bound the difference between the ideal and polynomial filters,
\begin{align}\label{eq_filtErr}
\lVert U (\widetilde{h}_{\lambda_{k_n}}(\Lambda) - h_{\lambda_{k_n}}(\Lambda)) U^T \rVert_F^2
=
\lVert \widetilde{h}_{\lambda_{k_n}}(\Lambda) - h_{\lambda_{k_n}}(\Lambda) \rVert_F^2
&=
\sum_{i=1}^n ( \widetilde{h}_{\lambda_{k_n}}(\lambda_i) - h_{\lambda_{k_n}}(\lambda_i) )^2 \nonumber \\
&\leq
\sum_{i=1}^n e_n^2
=
ne_n^2.
\end{align}
Using the result from \eqref{eq_RnProb} and \eqref{eq_filtErr}, 
we can bound the difference between the ideal and approximate
spectral embedding as follows.
\begin{align*}
\lVert \widetilde{X}_R^{(n)} - X_R^{(n)} \rVert_F^2 
=
\lVert \widetilde{H}_{\lambda_{k_n}} R - H_{\lambda_{k_n}} R^{(n)} \rVert_F^2
&=
\lVert U (\widetilde{h}_{\lambda_{k_n}}(\Lambda) - h_{\lambda_{k_n}}(\Lambda)) U^T R^{(n)} \rVert_F^2 \\
&\leq
\lVert U (\widetilde{h}_{\lambda_{k_n}}(\Lambda) - h_{\lambda_{k_n}}(\Lambda)) U^T \rVert_F^2
\lVert R^{(n)} \rVert_F^2 \\
&\leq
(1+ \epsilon_2) n^2 e_n^2
\end{align*}
where the last step follows with a probability of
 at least $1 - e^{-n d_n(\epsilon_2^2 - \epsilon_2^3)/4}$.
\end{proof}
\subsection{Proof of Lemma \ref{Lem_lambda_k}}
\begin{proof}
For the sake of compactness, we omit the superscript $(n)$
for the sequences of matrices, as the analysis is valid at
every $n$.

From Lemma \ref{Lem_ApproxFil}, we have a bound on 
the term $\lVert \widetilde{X}_R - X_R \rVert_F^2$.
So, we proceed to prove Lemma \ref{Lem_lambda_k} by bounding
the term $\lVert X_R \rVert_F^2$. For this, we make use 
of the fact that the $k_n$ columns of $X^{(n)}$ are orthonormal.
\begin{align}\label{eq_XR}
\lVert X_R \rVert_F^2 
= \lVert X {X^{(n)}}^T R \rVert_F^2
= k_n \lVert R \rVert_F^2
\end{align}
Combining \eqref{eq_XR} with \eqref{eq_RnProb},
we have the following with probability exceeding 
$1 - e^{-n d_n(\epsilon_2^2 - \epsilon_2^3)/4}$.
\begin{align*}
(1- \epsilon_2)k_n 
\leq 
\frac{1}{n} \lVert X_R \rVert_F^2 
\leq
(1+ \epsilon_2)k_n.
\end{align*}
Now, we prove the upper bound on $\widetilde{X}_R$.
\begin{align}\label{eq_lam1}
\lVert \widetilde{X}_R \rVert_F^2
= 
\mathop{tr} \big( \widetilde{X}_R^T \widetilde{X}_R \big)
&=
\mathop{tr} \big( R^T U \widetilde{h}_{\lambda_k}(\Lambda) U^T U \widetilde{h}_{\lambda_k}(\Lambda) U^T R \big)
\nonumber \\
&=
\mathop{tr} \big( R^T U (\widetilde{h}_{\lambda_k}(\Lambda))^2 U^T R \big) \nonumber \\
&=
\mathop{tr} \big((\widetilde{h}_{\lambda_k}(\Lambda))^2 U^T R R^T U \big) \nonumber \\
&\leq
\mathop{tr} \big((\widetilde{h}_{\lambda_k}(\Lambda))^2 \big)  \mathop{tr} \big( U^T R R^T U \big)
\end{align}
where the last statement follows from the fact that the matrices $\widetilde{X}_R^T \widetilde{X}_R$,
$(\widetilde{h}_{\lambda_k}(\Lambda))^2$ and $U^T R R^T U$ are non-negative semi-definite.
\begin{align}\label{eq_lam2}
\mathop{tr} \big( U^T R R^T U \big) &= \mathop{tr} \big(R R^T\big)
= \lVert R \rVert_F^2 \leq (1+\epsilon_2)n.
\end{align}
The last statement follows from \eqref{eq_RnProb} with a probability of
at least $1 - e^{-n d_n(\epsilon_2^2 - \epsilon_2^3)/4}$.

Using the definition of the maximum filter error $e_n$, we get
\begin{align}\label{eq_lam3}
\mathop{tr} \big((\widetilde{h}_{\lambda_k}(\Lambda))^2 \big)
\leq k(1+e_n)^2 + (n-k)e_n^2
&= k + 2ke_n + ne_n^2.
\end{align}

Combining \eqref{eq_lam2} and \eqref{eq_lam3} with \eqref{eq_lam1}, we get
\begin{align}\label{eq_lam_lb}
\frac{1}{n} \lVert \widetilde{X}_R \rVert_F^2 \leq (1+\epsilon_2)(k + 2ke_n + ne_n^2).
\end{align}
Now we proceed to proving the lower bound.
\begin{align}\label{eq_lam4}
\lVert \widetilde{X}_R \rVert_F^2
=
\lVert X_R \rVert_F^2 + \lVert \widetilde{X}_R - X_R \rVert_F^2
 +2 \mathop{tr}(X_R^T(\widetilde{X}_R - X_R)).
\end{align}
\begin{align}\label{eq_lam5}
\mathop{tr}\big(X_R^T(\widetilde{X}_R - X_R)\big)
&=
\mathop{tr}\big(R^T U h_{\lambda_k}(\Lambda) U^T U (\widetilde{h}_{\lambda_k}(\Lambda) - h_{\lambda_k}(\Lambda)) U^T R\big)
\nonumber \\
&=
\mathop{tr}\big(h_{\lambda_k}(\Lambda)(\widetilde{h}_{\lambda_k}(\Lambda) - h_{\lambda_k}(\Lambda)) U^T R R^T U\big)\nonumber \\
&=
\mathop{tr}\big(h_{\lambda_k}(\Lambda)(\widetilde{h}_{\lambda_k}(\Lambda) - h_{\lambda_k}(\Lambda) + e_n I_n) U^T R R^T U\big)  \nonumber \\
&\ \ \ - \mathop{tr} \big( h_{\lambda_k}(\Lambda) e_n I_n U^T R R^T  \big).
\end{align}
Here, $I_n \in \mathbb{R}^{n \times n}$ is the Identity matrix.
By the definition of $e_n$, the diagonal entries of 
$(\widetilde{h}_{\lambda_k}(\Lambda) - h_{\lambda_k}(\Lambda) + e_n I_n)$
are non-negative. Hence, the first term in \eqref{eq_lam5} is non-negative.
For the second term we have,
\begin{align}\label{eq_lam6}
\mathop{tr} \big( h_{\lambda_k}(\Lambda) e_n I_n U^T R R^T  \big)
\leq
e_n \mathop{tr} \big( h_{\lambda_k}(\Lambda)\big) \mathop{tr} \big( U^T R R^T  \big)
=
e_n k_n (1+\epsilon_2)n.
\end{align}
In addition, the term $\lVert \widetilde{X}_R - X_R \rVert_F^2$ in \eqref{eq_lam4} is non-negative.
Combining \eqref{eq_lam6} with \eqref{eq_lam4}, we get
\begin{align}\label{eq_lam_ub}
\frac{1}{n}\lVert \widetilde{X}_R \rVert_F^2
&\geq
(1-\epsilon_2)k_n - 2(1+\epsilon_2)e_n k_n.
\end{align}

Putting together \eqref{eq_lam_lb} and \eqref{eq_lam_ub}, 
we prove Lemma \ref{Lem_lambda_k}.
\end{proof}

\section{Proofs for Theorems in Section~\ref{sec_consistency}}
\label{appC}

\subsection{Proof of Lemma \ref{Lem_sufCondClus}}
\begin{proof}
We follow a similar technique as that of Lemma 3.2 in \citet{rohe2011spectral}.
Suppose that $\lVert c_i^{(n)} - \mathcal{X}_{R_{i*}}^{(n)} \rVert_2 < (1-\epsilon_1)\frac{1}{\sqrt{2P_n}}$
for some $i$. For any $z_j \neq z_i$, we have
\begin{align*}
\lVert c_i^{(n)} - \mathcal{X}_{R_{j*}}^{(n)} \rVert_2 
\geq
\lVert \mathcal{X}_{R_{i*}}^{(n)} - \mathcal{X}_{R_{j*}}^{(n)} \rVert_2 -
\lVert c_i^{(n)} - \mathcal{X}_{R_{i*}}^{(n)} \rVert_2 
&\geq
(1-\epsilon_1)\sqrt{\frac{2}{P_n}} - (1-\epsilon_1)\frac{1}{\sqrt{2P_n}} \\
&= (1-\epsilon_1)\frac{1}{\sqrt{2P_n}}
\end{align*}
Here we have used the result of Theorem \ref{Th_compressedconvSep}
on the separability of the rows of $\mathcal{X}_{R}^{(n)}$.
\end{proof}

\subsection{Proof of Theorem \ref{Th_miscl}}
\begin{proof}
From the output of the $k$-means we have
\begin{align*}
C = \mathop{arg\ min}_{M \in \mathscr{C}_{n,k}} \lVert M - \widetilde{X}_R \rVert_F^2
\end{align*}
From Theorem \ref{Th_sep} we know that only $k$ rows of the matrix
$\mathcal{X}$ are unique out of its $n$ rows. The same inference can be
made about $\mathcal{X}_R$. Hence, $\mathcal{X}_R \in \mathscr{C}_{n,k}$.
By the optimality of $k$-means we have
\begin{align*}
\lVert C - \widetilde{X}_R \rVert_F^2
\leq
\lVert \mathcal{X}_R - \widetilde{X}_R \rVert_F^2 
&\leq
2\lVert \mathcal{X}_R - X_R \rVert_F^2 + 2\lVert X_R - \widetilde{X}_R \rVert_F^2.
\end{align*}
Hence
\begin{align*}
\lVert C - \mathcal{X}_R \rVert_F^2 
&\leq
\lVert C - \widetilde{X}_R \rVert_F^2 + \lVert \widetilde{X}_R - \mathcal{X}_R \rVert_F^2 \\
&\leq
\Big( 2\lVert \mathcal{X}_R - X_R \rVert_F^2 + 2\lVert X_R - \widetilde{X}_R \rVert_F^2 \Big) 
 + \Big( 2 \lVert \widetilde{X}_R - X_R \rVert_F^2 + 2 \lVert X_R - \mathcal{X}_R \rVert_F^2 \Big)\\
&=
4 \lVert \mathcal{X}_R - X_R \rVert_F^2 + 4 \lVert X_R - \widetilde{X}_R \rVert_F^2
\end{align*}
From the definition of the misclustered vertices,
\begin{align*}
|\mathscr{M}| \leq \sum_{i \in \mathscr{M}} 1 
&\leq
\frac{2P_n}{(1-\epsilon_1)^2} \sum_{i \in \mathscr{M}} \lVert c_i - \mathcal{X}_{R_{i*}} \rVert_F^2 \\
&\leq
\frac{2P_n}{(1-\epsilon_1)^2} \lVert C - \mathcal{X}_{R} \rVert_F^2 \\
&\leq
\frac{2P_n}{(1-\epsilon_1)^2}
\Big(4 \lVert \mathcal{X}_R - X_R \rVert_F^2 + 4 \lVert X_R - \widetilde{X}_R \rVert_F^2 \Big)\\
&=
o\bigg(P_n \big( \frac{(\log n)^2}{n \bar{\lambda}_{k_n}^4 \tau_n^4} + n^2 e_n^2 \big) \bigg).
\end{align*}
The last statement follows from Theorem \ref{Th_compressedconvSep} and Lemma \ref{Lem_ApproxFil}.
\end{proof}

\end{document}